\DeclareMathOperator*{\E}{\mathbf{E}}
\newtheorem{definition}{Definition}
\newtheorem{theorem}{Theorem}
\newtheorem{corollary}{Corollary}
\newtheorem{lemma}{Lemma}
\title{Approximate Novelty Search}
\author{
    \textsuperscript{\rm 1}
    Anubhav Singh, \textsuperscript{\rm 1} 
    Nir Lipovetzky, \textsuperscript{\rm 2} 
    Miquel Ramirez, \textsuperscript{\rm 3} 
     Javier Segovia-Aguas
    \\
}
\begin{document}
\maketitle
	\begin{abstract}
		Width-based search algorithms seek plans by prioritizing states according to a suitably defined measure of novelty, that maps states into a set of novelty categories. Space and time complexity to evaluate state novelty is known to be exponential on the cardinality of the set. We present novel methods to obtain polynomial approximations of novelty and width-based search. First, we approximate novelty computation via random sampling and  Bloom filters, reducing the runtime and memory footprint. Second, we approximate the best-first search using an adaptive policy that decides whether to forgo the expansion of nodes in the open list. These two techniques are integrated into existing width-based algorithms, resulting in new planners that perform significantly better than other state-of-the-art planners over benchmarks from the International Planning Competitions.
		
	\end{abstract}
	
	\section{Introduction}
	\label{section:introduction}
	Autonomous systems operating on the edge of computer networks or that only have occasional, 
	sporadic access to vast, centralized computing resources require decision making algorithms
	that work under those conditions. Not only low response times are required to seek 
	courses of action that are safe and effective, but also the memory available for such 
	computation is limited. The need to adapt existing heuristic search algorithms such as
	A* to deal with time and space restrictions was recognized early on~\cite{chakrabati:89}
	and followed-up recently~\cite{vadlamudi:11:mawa,dionne:12:socs}.
	
	In this paper we look at \emph{width-based search methods}~\cite{lipovetzky:12:ecai}, 
	a family of algorithms that rely on heuristics that measure the \emph{novelty} of a 
	state, comparing its information content with that of states visited in the past. 
	Originally developed in the context of classical planning~\cite{geffner:concise},
	when combined with other heuristics~\cite{bonet:aij-hsp,hoffmann:landmarks,lipovetzky:17:aaai,katz:17icaps}, width-based planners become state-of-the-art and competitive with 
	portfolio solvers~\cite{IPC18}.
	A major shortcoming of the latter width-based methods derived from best-first search (BFS)~\cite{edelkamp:heuristics,pearl:heuristics}, such as those used by the planner 
	DUAL-BFWS,
	is that measuring novelty is 
	exponential on the number of discrete levels or categories used to rank states.
	Lipovetzky and Geffner~\shortcite{lipovetzky:12:ecai} showed that
	an upper bound exists for any given classical planning instance, yet this result 
	cannot be exploited, as this
	bound results in impractical runtimes to evaluate states, a crucial issue for the
	effectiveness of heuristic search methods.

	We address this issue by proposing
	new methods to obtain polynomial approximations
	of novelty and to control the growth of the memory footprint of BFWS algorithms. The first contribution is an appraisal approximation of the novelty of state information by
	randomly sampling the space of possible valuations of state variables, and using Bloom filters for efficient, but imprecise, storage of state information. The second contribution is a novel form of best-first search which uses an adaptive policy 
	that decides whether to delay the generation of successor states. This policy is derived from the analytical solution
	to an infinite-horizon Markov Decision Problem (MDP)~\cite{bertsekas:17}, where its cost function
	controls the representation of different novelty categories in the open list.
	
	The paper is structured as follows. We start discussing concisely background material covering classical planning, width-based search, and Bloom filters. Sections~\ref{p-error}
	and~\ref{olc} {expound} 
	the contributions of this paper, approximations of novelty measurements and search. We evaluate both over every benchmark in {the} IPC satisficing track. We finalize with a discussion of the importance and potential impact of our results.
	
	\section{Background}
	\label{section:background}
	
	We follow Bonet \& Geffner~\shortcite{geffner:concise} presentation of classical planning, an
	optimal control problem~\cite{bertsekas:17} where states are fully observable and made of a finite 
	number of Boolean atoms, actions are finite and optimal solutions
	(plans) map the unique initial state into one of many goal states, in the minimum number of steps. 
	The  transition system for a classical planning problem $P$ is defined as $S(P)$ $=$ $\langle S$, $s_0$, $S_G$, $A$,
	$T \rangle$ where $S$ is the state space, $s_0$ $\in$ $S$ is the initial state, $S_G \subseteq S$
	is the set of goal states, $A$ is the set of actions and $T: S \times A \rightarrow S$ is the 
	transition function. We define the set of \emph{applicable} actions $A(s)$ as the subset of
	$a \in A$, for which $T(s,a)$ is defined. A solution to $S(P)$ is a plan $\pi$ $=$ $\langle$ $a_1$, $\ldots$, 
	$a_n \rangle$, with length $n$, that maps an initial state $s_0$ $\in$ $S$ to a goal state $s_n$ $\in$ $S_G$, 
	such that each action is applicable in the corresponding state $a_i$ $\in$ $A(s_{i-1})$, along the 
	induced sequence $s_0$,...,$s_i$,...,$s_n$ where $s_i = T(s_{i-1},a_i)$. 
	
	We describe classical planning problems compactly using
	\textsc{Strips}~\cite{fikes:strips}. $P$ is given as the tuple $P = \langle F, O, I, G\rangle$, where 
	$F$ is the collection of Boolean atoms or fluents, $O$ is the set of operators, $I \subseteq F$ is a set of atoms that fully describe the initial state, and $G \subseteq F$ is a set of atoms in the goal state.
	The transition system $S(P)$ is obtained from $P$  as $S=2^{|F|}$, $s_0=I$,
	$s_G = \{ s \,|\, s \models G\}$, $T$ and $A(s)$ follow from the effects and preconditions of
	$a \in O$. Optimal solutions are those sequences of actions that
	are the shortest too. In this paper we consider \emph{satisficing} solutions, trading off between run time and quality. 
	
	The planners discussed in this paper are instances of the classic heuristic search algorithm named Best-First Search~\cite{edelkamp:heuristics,pearl:heuristics} or BFS for short. This algorithm searches for plans by extending incrementally all paths (nodes) in $S(P)$ starting from $s_0$. The nodes are visited in the order specified by an \emph{evaluation function} $f$ defined over paths, and the algorithm terminates in the first path that ends on a state $s \in S_G$. BFS implicitly \emph{enumerates}
	 the states $s$ in $S(P)$ by assigning to them a natural number, the \emph{expansion order}, $e(s)$~\cite{dionne:12:socs}, which is $0$ for $s_0$ and increases by one unit for every new state $s$ extending an existing path. We denote the set of states generated before $s$ as ${\cal P}(s)$ $=$ $\{\ s'\,|\, e(s') < e(s)\}$. As we will see in the next section, this ordering is crucial to define \emph{novelty measures}.
	
	\subsection{Best-First Width Search (BFWS)}
	\label{section:bfws}
	
	BFWS~\cite{lipovetzky:17:aaai} is a family of BFS algorithms where the evaluation function
	for a node $n$, $f(n)$ is defined as a tuple of functions

		\[
	f(n) = (w, h_1, \ldots, h_m)
	\]

	\noindent where $w: S \rightarrow {\cal W}$ is the function measuring novelty, that maps
	states $s \in S$ into categories $\omega \in {\cal W}$, ${\cal W} \subset \mathbb{N}$, and $H$ $=$ $\{h_1,\ldots,h_m\}$ is a set of suitably
	chosen functions. When inserting nodes $n$ in the open list, BFWS algorithms sort { in increasing order}
	according to the first function in $f(n)$, breaking ties recursively with the provided 
	$h_i$. These functions can also be used to partition the set of states generated before $s$ as $\bm{{\cal P}(s, H)}$ $=$ $\{\ s'\,|\, e(s') < e(s), h(s) = h(s') ~\forall h \in H\}$.

	\begin{definition} 
		\label{def:novelty}
		The \emph{novelty} $w(s)$ $=$ $w_{\langle H \rangle}(s)$ of a newly generated state $s$ given a set of
		partition functions $H$ over states $s \in S$ is $k$, iff (1) exists a 
		tuple\footnote{Conjunction of atoms.} $t \subseteq F$ of \emph{minimum} size $k$, s.t. $s \models t$,
		(2) $\forall{s' \in {\cal P}(s,H)}$, $s' \not\models t$.
	\end{definition}
	As noted in the introduction, BFWS algorithms are state-of-the-art over the IPC classical
	planning benchmarks. We illustrate the BFWS framework by discussing in detail BFWS($f_5$), one of  the best performing to-date in the Agile track (IPC 2018). 

	The evaluation function $f_5$ $=$ $\langle w,\#g\rangle$ makes BFWS to expand first novel states, breaking ties with a simple goal
	counting heuristic~\cite{fikes:strips} $\#g(s)$. The novelty function uses two heuristic functions to partition the novelty space $w = w_{\langle\#g,\#r\rangle}$, one is $\#g(s)$, and the other, $\#r(s)$, 
	counts the atoms $p$ achieved along the path to $s$, 
	such that $p \in R$, $R \subseteq F$, where $R$ is selected by a 
	\emph{relevance analysis} procedure. $R$ is meant to contain atoms which are
	\emph{instrumental} to reach the goal efficiently, so for domain-independent planning, one can
	instance $R$ as a set of landmarks~\cite{hoffmann:landmarks}, or the set of fluents which
	belong to positive effects of actions in the relaxed plan for $s_0$~\cite{hoffmann:ff}. Both of the above definitions of $R$
	were used in the planner DUAL-BFWS, and the later was used in BFWS($f_5$).
	
	Evaluating $w(s)$ requires to test states $s$ to belong to the categories $\omega \in {\cal W}$. Lipovetzky and
	Geffner~\shortcite{lipovetzky:17:aaai} define $\bm{{\cal W}}$ as the integer interval
	$[1, i+1]$ where $i$ is the size of the largest novel tuple generated by optimal plans for $P$.
	In this case, the test above requires to generate exhaustively all tuples $t$ of size $i$
	present in state $s$ and determine if they are present in ${\cal P}(s, H)$.
	An optimal procedure to implement Definition~\ref{def:novelty} follows. For each tuple of size $1$ $\leq$ $l$ $\leq$ $i$,
	let $\bm{\beta_l(s)}$ $=$ $\{ t\,|$ $t \subseteq F$, $s \models t$, $|t| = l \}$\footnote{We note that $\beta_l$ can be iterated
		by lazily generating its elements.}. Starting with $l=1$, we enumerate tuples $t$ $\in$ $\beta_l(s)$, and then test
	if $t$ is part of previously observed tuples in the set $\bm{{\cal N}(s)}$ = $\bigcup_{s'\in {\cal P}(s, H)} \beta_l(s')$.

	If the test is negative for at least one tuple, then
	$w(s) := l$, otherwise, we need to test the elements of $\beta_{l+1}$, until $l=i$. If the test is positive
	for all $t$ $\in$ $\beta_{i}$, then the state is considered not to be novel, and $w(s)$ $:=$ $i+1$.
	This leads to the exponential time and space $O(|F|^{i})$ requirements to evaluate $w(s)$ that we outlined
	in Section~\ref{section:introduction}. While $i$ is known to be generally way smaller than $|F|$, the bound $i$
	is usually high enough to render the novelty test up to $i$ impractical. In the case of
	the IPC benchmarks, any value of $i > 2$ leads to very high runtimes to generate states. 
	In practice, $w(s)$ is approximated by setting $i$ to an arbitrary lower bound, which renders 
	the evaluation of $w(s)$ to be tractable but may relegate states with valuable information
	to the back of the open list.
	
	\subsection{Bloom Filters}
	\label{section:bloom_filters}
	The \emph{Bloom filters}~\cite{bloom,louridas:algorithms} are a probabilistic data
	structure to represent sets efficiently, at the expense of allowing false positives when
	testing whether the set contains a given object. Typical implementations of Bloom filters
	consist of a bit-array $\bm{v}$ of size $\bm{r}$, where all entries $v_j$ are initially set to $\bot$, and
	$\bm{K}$ independent hash functions $\bm{\eta}$ that map objects into the range $[1, r]$. To add an object $o$
	as a member of the set represented by the Bloom filter, the $K$ hash functions $\eta_l$ are evaluated
	on $o$, so $v_{\eta_{l}(o)} := \top$ for $l=1$,..., $K$. To test whether $o$ is in the set,
	the hash functions $\eta_l$ are evaluated, and if all $v_{\eta_{l}(o)} := \top$, then $o$ is considered
	to be an element of the set. In comparison with a traditional hash table whose size grows with
	that of the range of possible objects, the Bloom filter has fixed-size $r$.
	
	The choice of value for $r$ determines the probability to obtain a \emph{false positive}, that is,
	testing $o$ for containment and getting a positive answer, when $o$ has not been previously added as
	a member. As  noted in \cite{Andrei-Michael} the probability of a false positive is given by
	\begin{align}
	P_{f} = \left( 1- e^{-\frac{Kq}{r}} \right)^{K} \label{eq:bloom_filter}
	\end{align}
\noindent where $q$ is the \emph{expected} number of different objects to be tested. The analytical solution to the problem of minimization of false positive rate with respect to $K$ shows that $P_f$ is minimized when $K$~=~$(r/q)~\ln{2}$ \cite{Andrei-Michael}.  {Since the expected number of different objects, nodes in planning, tend to be larger than the memory}, it follows that {when} $(r~\ln{2}) \leq q$,  then $K$~=~$1$ minimizes $P_f$. 

	\section{Novelty Approximation}
	\label{p-error}
	In this section, we describe an approximate measure of novelty for newly generated states, $\hat{w}(s)$, 
	which is tractable and can be proved to be equal to $w(s)$ with positive probability. For that, 
	Definition~\ref{def:novelty} is relaxed as follows
	\begin{definition} 
		\label{def:approx_novelty}
		The \emph{approximate novelty} $\hat{w}(s)$ $=$ $\hat{w}_{\langle H \rangle}(s)$ of 
		a newly generated state $s$ given a set of
		partition functions H over states $s \in S$ is $k$, iff (1) exists a 
		tuple $t \subseteq F$ of \emph{minimum} size $k$, s.t. $t \in Z_k(s)$, (2) 
		$\forall{s' \in {\cal P}(s, H)}$, ${\cal O}(s', t)=\bot$. 
		
	\end{definition}
	\noindent We have changed Definition~\ref{def:novelty} in two ways. First, we only test the
	tuples from a randomly sampled set $\bm{Z_{l}(s)}$ $\subseteq$ $\beta_l(s)$, 
	for $l=1$,...,$i$. 
	We require that the probability of every tuple in $\beta_l(s)$ being selected is  uniformly
	distributed. For that we sample \textit{without replacement} from the \emph{discrete uniform distribution} over $\beta_l(s)$, with probability mass function $p_{X}(t)$ = $z/|\beta_{l}(s)|$ for all $t\in \beta_{l}(s)$, $z$ = $|Z_l(s)|$, representing the probability of occurrence of $t$ in $Z_l(s)$. We note that tuples are sampled, independently, for each state.
	Second, we replace the condition $s' \not\models t$ for a random variable $\bm{{\cal O}(s', t)}$ that models the runtime
	behavior of a Bloom filter. ${\cal O}(s',t)$ maps pairs $s', t$ to $\top$ with probability $1$ when $s' \models t$ and $t \in Z_l(s')$, otherwise, it maps $s',t$ to $\bot$ with positive probability. 
	The tractability of $\hat{w}(s)$ follows
	from requiring $r$, the number of entries in
	the Bloom filter, and $|Z_l(s)| \leq \bar{Z}$, where $\bm{\bar{Z}}$ is the maximum size of $Z_l(s)$, to be constants, e.g. $ \bar{Z}$ $=$ $r$ $=$ $|F|$. It is trivial to note that the running time of any reasonable algorithm for {computing $\hat{w}_{\langle H \rangle}(s)$ as per}
	Definition~\ref{def:approx_novelty} is $O(i\bar{Z})$ and memory requirement is $O(ir)$, dropping the complexity of tuple membership checks from exponential to linear on $i$.
	
	These two simple changes suffice to allow measures of novelty that are much finer than what can be 
	obtained with highly optimized implementations of $w(s)$, but certainly, and as noted at the beginning of
	the section, there is a certain probability that $\hat{w}(s)$ and $w(s)$ will not be in agreement. The rest
	of this section is devoted to provide a probabilistic model of the rate at which approximate and actual
	novelty disagree.
	
	\subsubsection{Impact of Sampling.}
	\label{section:impact_sampling}
	We proceed now to derive the probability of error induced by sampling from $\beta_l(s)$ following the
	discrete uniform distribution. By \textit{error} we refer to the event of $w(s)$ $\neq$ $\hat{w}(s)$, that is 
	$\hat{w}(s)$ is greater or less than $w(s)$ for a state $s$. We start by defining the probability $\bm{\gamma_t}$, given $s$ and ${\cal P}(s, H)$, of a particular tuple $t\in\beta_l(s)$ observed as \textit{new}  in $s$, that is $t \notin \bigcup_{s'\in {\cal P}(s,H)}Z_l(s')$ and $t\in Z_l(s)$, as
	\begin{align}
	\gamma_t =  \left(\prod_{s' \in {\cal P}_t '(s)}\left( 1 - \frac{z}{|\beta_l(s')|} \right)\right)\,\frac{z}{|\beta_l(s)|} \label{p_gamma}
	\end{align}
	\noindent
	where ${\cal P}_t'(s)$ is defined as the set $\{ s'\,|\, s' \in {\cal P}(s, H), s' \models t\}$, $z$ is the sample size, $z/|\beta_l(s)|$
	follows from the probability mass function $p_X$. The event of taking a sample at $s$ is \textit{independent} from that of sample at $s'\in {\cal P}(s, H)$ which allows us to use the \textit{product rule}. 
	Also, 
	it follows that as $z\to |\beta_l|$, the probability $\gamma_t\to 0$ when ${\cal P}_t'(s)$$\neq$$\emptyset$, and  $\gamma_t\to 1$ when ${\cal P}_t'(s)$$=$$\emptyset$. That is, if all tuples $t\in \beta_l(s')$ are sampled in each $s' \in S$ , as we do when computing novelty exactly, the probability of $t$ being \textit{new}  in $s$ is $0$, if $t \in \bigcup_{s'\in {\cal P}(s,H)}Z_l(s')$, and $1$ otherwise.

	From Equation~\ref{p_gamma}, we follow that the probability of tuple $t$ \textit{not} being \textit{new} is $\left[1 -  \gamma_t \right]$. We use this result to compute the probability that none of the tuples $t\in\beta_l(s)$ are \textit{new}, assuming \textit{independence} between different tuples {to make the derivation tractable}, as
	
	\begin{align}
	p_{l} = \prod_{t \in  \beta_l(s)}\left[1 -  \gamma_t \right] \label{p_l}
	\end{align}
	
	\noindent Using Equation~\ref{p_l}, we can now define the probability of approximate novelty measure to be greater or smaller than actual, $P_H = P(\hat{w}(s)>w(s))$, $P_L = P(\hat{w}(s)<w(s))$ respectively, as
	
	\begin{align}
	P_{H} = \prod_{i=1}^{w(s) } p_i ,~~P_L = \left(1- \prod_{i=1}^{w(s)-1} p_i\right)  \label{P_LH}
	\end{align}
	
		\noindent Finally, the probability of approximate and actual novelty measures to agree, $P_C = P(\hat{w}(s)=w(s))$, is :
	
	\begin{align}
	P_{C} = \left( \prod_{i=1}^{w(s)-1} p_i  \right)\;\left(1-p_{w(s)}\right) \label{P_C}
	\end{align}
	
	In Lemma~\ref{lem:accum_prob} we prove the sum of probabilities in Eqs.~\ref{P_LH} and \ref{P_C} to be $1$,
	and where $(P_L+P_H)$ is the total probability of error induced by sampling. 
	
	\begin{lemma} 
		The sum of probabilities $P_L$, $P_H$ and $P_C$ is $1$.
		\label{lem:accum_prob}
	\end{lemma}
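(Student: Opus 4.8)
The plan is to reduce the claim to a short algebraic telescoping identity. First I would introduce the shorthand $Q_k = \prod_{i=1}^{k} p_i$ for the cumulative product of the per-level ``no tuple is new'' probabilities, with the empty-product convention $Q_0 = 1$. Writing $w = w(s)$ for brevity, Equations~\ref{P_LH} and~\ref{P_C} then read $P_H = Q_w$, $P_L = 1 - Q_{w-1}$, and $P_C = Q_{w-1}\,(1 - p_{w})$.

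The key step is to factor $P_C$ using the recursion $Q_w = Q_{w-1}\,p_w$ that is built into the definition of the cumulative product. Expanding gives $P_C = Q_{w-1}(1 - p_w) = Q_{w-1} - Q_{w-1}\,p_w = Q_{w-1} - Q_w$. With this, all three terms are expressed entirely through the two adjacent cumulative products $Q_{w-1}$ and $Q_w$, and the sum telescopes:
\[
P_L + P_H + P_C = (1 - Q_{w-1}) + Q_w + (Q_{w-1} - Q_w) = 1,
\]
since the $Q_{w-1}$ and $Q_w$ contributions cancel pairwise, leaving only the constant term.

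The computation itself is routine; the part deserving care is the conceptual justification that these three formulas genuinely describe a partition of the sample space. The events $\hat{w}(s) > w(s)$, $\hat{w}(s) = w(s)$, and $\hat{w}(s) < w(s)$ are mutually exclusive and exhaustive by the trichotomy of comparing two natural numbers, so any faithful probabilistic model must assign them probabilities summing to one. I therefore view the identity above less as a new fact about the underlying sampling process and more as a consistency check: it confirms that the independence-based derivations of $P_H$, $P_L$, and $P_C$ in the preceding equations are internally coherent and do not double-count or omit probability mass. I expect the main (mild) obstacle to be stating this interpretation cleanly, rather than the algebra.
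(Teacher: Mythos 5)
Your proof is correct and essentially identical to the paper's: the paper also abbreviates $q = \prod_{i=1}^{w(s)-1} p_i$ (your $Q_{w-1}$), rewrites $P_L = 1-q$, $P_H = q\,p_{w(s)}$, $P_C = q\,(1-p_{w(s)})$, and cancels terms exactly as in your telescoping sum. Your closing remark on the trichotomy of the three events is a reasonable interpretive gloss but does not change the argument.
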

	\begin{proof}
		Let $q = \prod_{i=1}^{w(s)-1}p_i$ be the probability of not finding new tuples with novelty below $w(s)$. Then, Equations are rewritten as $P_L = 1 - q$, $P_H = q\,p_{w(s)}$ and $P_C=q\,(1 -p_{w(s)})$ for a given novelty $w(s)$. Hence, $P_L + P_H + P_C = (1-q) + q \, p_{w(s)} + q\,(1-p_{w(s)}) = 1 - q + q \, p_{w(s)} + q - q \, p_{w(s)}  = 1 $, proving the correctness of equations.
	\end{proof}

	\subsubsection{Synergies between Sampling and Bloom Filters.}
	\label{section:bloom_filter_config}
	The number of generated states ${\cal P}(s)$ before state $s$ can be $O(b^d)$, that is, exponential on the branching factor $b$ of the transition system $S(P)$ and the length $d$ of the path to $s$ from $s_0$. 
	Therefore, replacing ${\cal N}(s)$ = $\bigcup_{s'\in {\cal P}(s,H)}$ $\beta_l(s')$ by a Bloom filter with $r$ entries cannot come for free. 
	While the Bloom filter will always give the correct answer to membership queries for tuples $t$ that are in ${\cal N}(s)$, it can produce false positives for membership, as it incorrectly gives a positive answer for tuples $t'$ $\notin$ ${\cal N}(s)$. 
	
	We note that sampling from $\beta_l$ enables the use of Bloom filters to ``approximate'' ${\cal N}(s)$. This
	is because it leads to a reduction of the probability of false positives $P_f$ given in 
	Equation~\ref{eq:bloom_filter}, in comparison with what we would obtain from using $\beta_l$ directly as
	in the algorithm for $w(s)$ given in Section~\ref{section:bfws}. This observation follows from noting that
	$q$ in Equation~\ref{eq:bloom_filter} is the expected number of distinct tuples $t$ sampled during the search,
	and the rate of growth of this random variable is directly proportional to $\bar{Z}$. The smaller $\bar{Z}$ is,
	the slower $q$ will grow. 
	From Equation~\ref{eq:bloom_filter},  we can see that the probability
	of false positives $P_f$ increases with the ratio $q/r$, which in turn depends only on $q$
	as $r$ is a constant. Therefore, the rate of growth of $P_f$ depends on $\bar{Z}$. 
	
	Finally, we note that $P_f$ will be maximized when $q$ is exponential on $l$, the maximum size of the tuples
	considered. Then, in principle, false positive probability increases too as $l$ grows larger.
	
\subsubsection{Total probability of error.} 
	
	To obtain the total probability of erroneously appraising the novelty of a newly generated state
	$s$, $P_{error}$, we incorporate Equation~\ref{eq:bloom_filter} into Equation~\ref{p_gamma}
	\begin{align}
	\gamma_t = \left[ \left(\prod_{s' \in {\cal P'}(s)}\left( 1 - \frac{z}{|\beta_l(s')|} \right)\right)\,
	\frac{z}{|\beta_l(s)|}\,
   \left(1-P_f\right)\right] \label{p_i_total}
	\end{align}
	\noindent used to evaluate Equation~\ref{P_LH}, from which then it
	follows that $ P_{error} = P_L + P_H$.
	
	\subsubsection{{Conjoining BFWS($f_5$) and Novelty Approximation.  } }
 	
 	Novelty approximation using sampling and Bloom Filter can be directly applied to BFWS. We replace the $w(s)$ in BFWS($f_5$) with the approximation $\hat{w}(s)$ resulting in $\hat{f}_5 = \langle \hat{w}, \#g\rangle$.

  Additionally,  any reasonable implementation to compute ${\hat w}(s)$ $=$ ${\hat w}_{\langle\#g,\#r\rangle}(s)$ for BFWS($\hat f_5$), as per Definition~\ref{def:approx_novelty},  needs to track the evaluations of partition functions $H$ $=$ $\{\#g,\#r\}$ for all observed tuples. This increases the space complexity by a factor of number of possible partitions, $|G| \cdot |F|$.
   We manage the increase in space complexity by employing a set of Bloom filters, a \textit{bank} $\bm{V}$, and then bounding
   the space available for novelty computation by a parameter $\bm{D_{max}}$. In case $D_{max}$ is sufficiently large to track all the tuples and evaluations of partition functions, we enable exact item membership tests. Otherwise, we use the \textit{bank} of Bloom filters.
   Whenever a new partitioned space is observed, we assign it a Bloom filter from $V$. If the number of observed partitions exceeds $|V|$, we overlap them randomly, allowing different partitions to use the same Bloom filter.  This results in a gradual decrease in the accuracy of novelty computation in exchange for space, the $P_{error}$ increases as more partitions overlap. 
   
   The resulting planner BFWS(${\hat f}_5$)  has the following hyperparameters, namely, the sample size $\bar{Z}$, the size of a Bloom filter $r$ and the bound of space $D_{max}$.  In Section~\ref{experiments}, we present experimental evaluations with different choices of parameter values. 
   
   	\subsubsection{{Increasing the novelty bound.} }
   As discussed in  Section~\ref{section:bfws}, any implementation of Definition~\ref{def:novelty} has a complexity of $O(|F|^{i})$ rendering the computation impractical for many instances. Whereas, Definition~\ref{def:approx_novelty} has linear complexity allowing us to compute ${\hat w}(s)$ for any value of $i \in [1, |F|]$. In the following section, we describe the impact of increasing ${\cal W} \in [1,  i+1]$ in the \emph{polynomial} planners: BFWS($f_5$) with novelty pruning \cite{lipovetzky:17:icaps}.
	\begin{theorem} 
	Let $P$ $=$ $\langle F, O, I, G\rangle$ be a STRIPS planning problem. The number of nodes \textit{generated} for each novelty category $\omega \in {\cal W}$, when run with $P$ as input, is less than or equal to  ${\binom{|F|}{\omega}} \cdot |G| \cdot |F|$.
	\label{theorem:num_novelty_nodes}
	\end{theorem}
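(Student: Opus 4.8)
The plan is to factor the count into two independent contributions and take their product: the number of distinct partitions induced by the partition functions $H = \{\#g,\#r\}$ used by $f_5$, and the maximum number of states that can be assigned novelty exactly $\omega$ within a \emph{single} partition. The first factor is a routine counting of the ranges of $\#g$ and $\#r$; the second is a combinatorial argument resting directly on Definition~\ref{def:novelty} and on the fact that novelty is measured against the partition-restricted history ${\cal P}(s,H)$. Since the partitions are disjoint, summing the per-partition bound over all partitions yields the global bound.

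For the per-partition bound, I would fix a partition, i.e.\ a pair of values $(\#g,\#r)$, and enumerate the generated states assigned to it that have novelty exactly $\omega$, listing them as $s_1,\ldots,s_N$ in order of their expansion index $e(s_1) < \cdots < e(s_N)$. By Definition~\ref{def:novelty}, each $s_j$ witnesses a tuple $t_j \subseteq F$ with $|t_j| = \omega$, $s_j \models t_j$, and $s' \not\models t_j$ for every $s' \in {\cal P}(s_j,H)$. Because $s_1,\ldots,s_{j-1}$ lie in the same partition and precede $s_j$, they all belong to ${\cal P}(s_j,H)$, so none of them satisfies $t_j$; in particular, for $i<j$ we have $s_i \models t_i$ yet $s_i \not\models t_j$, which forces $t_i \neq t_j$. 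Hence $t_1,\ldots,t_N$ are pairwise distinct size-$\omega$ subsets of $F$, the map $s_j \mapsto t_j$ is injective into the set of such subsets, and therefore $N \le \binom{|F|}{\omega}$.

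To finish, I would bound the number of partitions. The function $\#g$ ranges over at most $|G|$ distinct values (one per count of goal atoms) and $\#r$ over at most $|F|$ values, as it counts achieved atoms of $R \subseteq F$; consequently the number of distinct partitions is at most $|G| \cdot |F|$. Multiplying the per-partition bound $\binom{|F|}{\omega}$ by this count gives that the number of generated nodes of novelty $\omega$ is at most $\binom{|F|}{\omega} \cdot |G| \cdot |F|$, as claimed.

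The step requiring the most care is the injectivity claim: that each novelty-$\omega$ state within a partition contributes a \emph{fresh} size-$\omega$ witness. This hinges on reading condition (2) of Definition~\ref{def:novelty} as ``new relative to ${\cal P}(s,H)$'', the states generated earlier \emph{in the same partition}, which is exactly what makes $s_j \mapsto t_j$ injective. Two minor points should also be checked: that novelty pruning only ever removes states and so cannot increase the count, keeping the combinatorial quantity a valid upper bound on generated nodes; and that the slight looseness in the partition count (the ranges are technically $|G|+1$ and $|R|+1 \le |F|+1$) is absorbed into the stated bound.
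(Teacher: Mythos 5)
Your proof is correct and takes essentially the same route as the paper's: the paper likewise bounds the number of partitions by $|G|\cdot|F|$ and the number of novelty-$\omega$ nodes per partition by $\binom{|F|}{\omega}$, then multiplies. Your explicit injectivity argument --- that each novelty-$\omega$ state within a partition must consume a fresh size-$\omega$ witness tuple, since earlier same-partition states lie in ${\cal P}(s,H)$ --- is exactly the step the paper leaves implicit (and your off-by-one caveat about the ranges of $\#g$ and $\#r$ is a looseness the paper shares).
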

	\begin{proof}
		The upper bound on the number of observed state partitions is given by $|G| \cdot |F|$. Also, the count of tuples of size $\omega$, of atoms in $F$, is ${\binom{|F|}{\omega}}$. Hence,  the number of nodes with $w(s)$ $=$ $\omega$ cannot exceed ${ \binom{|F|}{\omega}} \cdot |G| \cdot |F|$.
	\end{proof}
 
 	\begin{corollary} 
 		Let $P$ $=$ $\langle F, O, I, G\rangle$ be a STRIPS planning problem, and BFWS(${\hat f}_5$) the polynomial planner using Bloom filters introduced above. When run with $P$ as its input, BFWS(${\hat f}_5$) generates at most $|V| \cdot {r}$ nodes for each novelty category $\omega \in {\cal W}$.
 		\label{cor:num_exp_gen_bloom}
	 \end{corollary}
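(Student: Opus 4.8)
The plan is to establish the bound by a counting argument that parallels the proof of Theorem~\ref{theorem:num_novelty_nodes}, but with the combinatorial tuple count $\binom{|F|}{\omega}$ and the partition count $|G|\cdot|F|$ replaced by the \emph{storage capacity} of the bank $V$. The central quantity to track is the number of bits that can ever transition from $\bot$ to $\top$ over the entire search. First I would invoke the operating regime identified in Section~\ref{section:bloom_filters}: since the number of distinct tuples sampled during search dominates the available memory, we are in the case $(r\ln 2)\le q$, where $K=1$ minimizes $P_f$. With a single hash function, each tuple $t$ maps to exactly one bit $\eta(t)\in[1,r]$ of the Bloom filter assigned to its partition. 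By Definition~\ref{def:approx_novelty}, a node $s$ is generated with $\hat w(s)=\omega$ only if some sampled tuple $t\in Z_\omega(s)$ tests negative, i.e. $\mathcal{O}(s',t)=\bot$ for every earlier $s'$ in the same partition; under $K=1$ this is precisely the event that the bit $\eta(t)$ is still $\bot$ when $s$ is evaluated.

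Next I would exploit the \emph{monotonicity} of the Bloom filter: once a tuple is found novel it is inserted, setting $\eta(t):=\top$, and bits are never reset. Hence every node generated with novelty $\omega$ is witnessed by the setting of a \emph{previously unset} bit. Crucially, collisions can only help us: if a later distinct tuple $t'$ hashes to an already-set bit, the membership test returns $\top$ (a false positive), so $t'$ cannot certify novelty and no new node is charged to that category. Consequently each Bloom filter can certify a novel tuple at most $r$ times---at most once per bit---after which every query it receives returns $\top$.

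Finally, summing over the bank, the total number of bits available is $|V|\cdot r$, and each node with $\hat w(s)=\omega$ consumes one fresh bit-setting; therefore at most $|V|\cdot r$ nodes can be generated for the category $\omega$, which is the claim. I would also remark that the argument is not confined to $K=1$: for $K>1$ a tuple is declared novel only when at least one of its $K$ bits is $\bot$, and inserting it sets that bit (among others), so each certification still consumes at least one fresh bit and the per-filter count remains at most $r$.

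The main obstacle I anticipate is the bookkeeping of the charging scheme rather than any deep step. One must argue carefully that distinct novelty-$\omega$ nodes are charged to distinct fresh bit-settings even when several partitions overlap onto the same filter, and be explicit that the categories share the $|V|\cdot r$ bits of the bank, so that the stated per-category bound in fact follows a fortiori from the stronger observation that the \emph{total} number of novel-tuple certifications across all categories is at most $|V|\cdot r$. Making the reduction to the $K=1$ regime precise, and reconciling this analysis with the exact-storage count of Theorem~\ref{theorem:num_novelty_nodes} so that the Bloom-filter capacity $|V|\cdot r$ cleanly replaces $\binom{|F|}{\omega}\cdot|G|\cdot|F|$, completes the reasoning.
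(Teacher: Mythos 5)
Your proposal is correct and follows essentially the same route as the paper's proof: the key observation in both is that a Bloom filter of size $r$ can return at most $r$ negative membership answers (each novelty certification sets a previously unset bit and bits are never cleared), and multiplying by the $|V|$ filters in the bank gives the bound. Your write-up is considerably more explicit than the paper's terse argument---in particular the fresh-bit charging scheme, the remark that false positives only tighten the count, and the extension to $K>1$ hash functions---but these are elaborations of the same idea rather than a different proof.
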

	 
	 \begin{proof}
	 A Bloom filter represents ${\binom{|F|}{\omega}}$ tuples, but the number of true negatives is bound by the size of Bloom filter $r$. Also, we create a set of Bloom filters, the \textit{bank} $V$, that represents the set of partitioned spaces of cardinality $|G| \cdot |F|$. Hence, the above bound holds.
	 \end{proof}
	
	From Theorem~\ref{theorem:num_novelty_nodes}, we note that the bound on number of nodes with $w(s)$ $=$ $\omega + 1 $  increases by $O(|F|)$ in comparison to those with $w(s)$ $=$ $\omega$ , which makes nodes with large value of ${\hat w}(s)$ unlikely candidates for expansion.
		This leads us to another important issue afflicting BFWS algorithms, 
	inherited from BFS, i.e.	only a small fraction of the nodes that 
	make it into the open list are ever considered for expansion. 
	 %
	 One possible method to address this challenge is to choose small size for $r$ and $V$, and from 
	 Corollary~\ref{cor:num_exp_gen_bloom} we can deduce that it will bound the nodes in each novelty category by $|V| \cdot r$, hence there are less nodes in each category.  This makes it seem that nodes with high novelty $w(s)$ are now more likely to expand, however, in practice  nodes receive a higher approximate novelty value  ${\hat w}(s)$  because of increase in $P_{f}$.  In the following section, we discuss a method that remediates this.
	  
	\section{Best First Search with Open List Control}
	\label{olc}
	
	We propose an example of a  novel methodology to design 
	BFS algorithms that aim at controlling the rate of growth of {nodes of each category $\omega \in {\cal W}$ in} the open list. 
	To do this, we model the search as a discrete-time dynamical
	system subject to perturbation, and an optimal control problem~\cite{bertsekas:17} is 
	formulated where optimal policies ensure that a rate of growth less than the
	branching factor $b$ of $S(P)$ is sustained. With some simplifying assumptions,
	the optimal policy for this control problem can be derived analytically, as shown below, and
	integrated directly into the search algorithm.
	We model the evolution over time of the internal state (i.e. size of open lists) of a BFWS-like algorithm $B$, subject to function  $T_B$, abstracting the instructions
	executed in one iteration of
	the expansion loop of $B$, as the dynamical system
	\[ x_{k+1} = T_B(x_k, {u}_k, c_k) \]
	where, $\bm{k}$ is the index of current expansion, $\bm{x_{k}}$ 
	 is a suitably defined abstraction 
	of the internal state of the search algorithm $B$ at time $k$, 
	$\bm{u_k} \in [0,1)^{|{\cal W}|}$ is the \emph{control action}, 
	that prescribes the \emph{pruning rate} for states with $w(s)$ $=$ $\omega$, and
	$\bm{c_k} \in \mathbb{N}^{|{\cal W}|}$ is the count of successor states $s'$ at time $k$ with $w(s')=\omega$. 
	$c_k$ is the \emph{perturbation}, that is, an uncontrollable side-effect of node expansion
	that has been modeled as a uniformly distributed discrete random variable. 
	
	The information we track in $\bm{x_k}$ is given by the tuple $\langle n_e$, $n_v(\omega) \rangle$, where $n_e$ is the number of expanded nodes so far, and $n_v(\omega)$ is the
	count of \emph{novel states} visited for each novelty category $\omega \in {\cal W}$.
	If $u = 0^{|{\cal W}|}$, then $T_B$ is deterministic and $B$ behaves like a standard BFWS
	algorithm. Otherwise, the successors $s'$ of state $s$ pointed at by node $n$ in the
	open list with $\min f(n)$ are generated with
	probability $1 - u^{\omega}$ when $w(s')=\omega$. If some $s'$ is pruned, $n$ is kept in a \emph{holding queue}
	and re-expanded whenever the open list becomes empty.
	An important implementation detail for $B$ is that it needs to maintain 
	$|{\cal W}|$ open lists in parallel, as keeping smaller open lists is often more 
	performant~\cite{burns:12:socs}, also greatly facilitating implementation and 
	computation of states $x_k$.	
	
	In order to formulate an optimal control problem, we need first to specify a cost function. Considering a very large number of stages or expansions, we can reasonably assume that the horizon is infinite, and define the average cost per stage function~\cite{bertsekas:17}, with a policy $\pi=\{\mu_0, \mu_1,...\}$, of the form
	\begin{align} 
	J_{\pi} = \lim_{N\to \infty} \frac{1}{N} \E_{c_k} \left[ \sum_{k=0}^{N-1} g(x_k, \mu_k(x_k), c_k) \right]
	\label{eq:olc_cost}
	\end{align}
	
	\noindent The choice of the cost per stage $g(x_k$, $\mu_k(x_k)$, $c_k)$ 
	is dictated by the need to seek a trade-off between the {number of states in the} open list {with ${\hat w}(s)$ $=$ ${\omega}$} growing too large and missing out 
	useful novel states. We define $g_k$ as follows
	\begin{align}
	g_k =  \sum_{\omega \in {\cal W}} \left[ {c_{k}^{\omega}\;(1-\mu_k^{\omega}(x_k))} + \frac{1}{(1-\mu_k^{\omega}(x_k))}\right]
	\label{eq:olc_reward}
	\end{align}
	\noindent 
	where the expected count of successor nodes, with novelty $\hat{w}(s)=\omega$, added to the open list at time $k$ is given by 
	$c_{k}^{\omega}\,(1-\mu_k^{\omega}(x_k))$,  and the second term is the inverse rate of node generation, as we want every possible value of novelty to be represented in the open list with positive probability.
	Using Equation~\ref{eq:olc_reward} we can rewrite $J_\pi$ as
	\begin{align} 
	J_{\pi} = \lim_{N\to \infty} \frac{1}{N} \E_{c_k} \left[ \sum_{k=0}^{N-1} g_k\right] \label{eq:olc_cost_1}
	\end{align}
	\noindent We make an assumption to facilitate obtaining the optimal policy, namely, $\mu_k$ will converge to some
	stationary $\mu$ as $k \to \infty$, so it can be used to estimate the cost of future stages accurately.
	Also, the expected value of uniformly distributed random variable $c_k^{\omega}$ is calculated
	from $n_v(\omega)$ and $n_e$, as $E\left[c_k^{\omega}\right] = n_v(\omega)/n_e$.
	It follows then from Equation~\ref{eq:olc_cost_1} that
	\begin{align}
	J_{\pi} =  \sum_{\omega \in {\cal W}} \left(\frac{n_v(\omega)\,(1-\mu^\omega(x_k))}{n_e} + \frac{1}{\left(1-{\mu^\omega(x_k)}\right)}\right)
	\label{eq:olc_final_cost}
	\end{align}
	
	\noindent We note that $J_{\pi}$ is strictly convex and differentiable over 
	$\mu^\omega \in [0,1)$, and optimal values for the control inputs correspond with optimal
	solutions of the optimization problem
	\[ \min_{\mu \in [0, 1)^{|\cal W|}} J_\pi \]
	
	\noindent Such a solution is directly obtained from Equation~\ref{eq:olc_final_cost} from
	the solution of the differential equation $\partial J_\pi / \partial \mu^{\omega} = 0$,
	\begin{align} \label{opt-policy}
	\mu^\omega(x_k) =
	\begin{cases} 
	 {1 - \left(\frac{n_e}{n_v(\omega)}\right)^{\frac{1}{2}}}, &  \text{if } n_e/n_v(\omega)<1\\
	 0, 																														& \text{otherwise}
	\end{cases}
	\end{align}

	{ Note that the \textit{holding queue} follows the well-established practice of segmenting the search frontier into multiple queues \cite{richter2010lama}. We use the optimal policy that we derived above to control the different queues based on their $w(s)$ values, ensuring that each queue represents every category $\omega \in {\cal W}$ with positive probability. The queues are accessed sequentially, expanding all the nodes in the current queue before switching to the next. Also, the way we implement it, the queues are generated lazily, following a partial expansion of nodes whose successor falls in the subsequent queues.}
	
	\section{Experimental Evaluation}
    \label{experiments}
	
	In order to evaluate the impact \textit{novelty approximation} and \textit{open list control} has on width-based planners, we implemented different instantiations of BFWS($f_5$): \textit{complete} as described in the Section~\ref{section:background}, or \textit{incomplete} if nodes with novelty greater than a given bound are pruned \cite{lipovetzky:17:icaps}. We used the \textit{Downward Lab}'s experiment module \cite{seipp:lab} on a server with Intel Xeon Processors (2 GHz) with a $1800\;sec$ and $8\;GB$ \textit{time} and \textit{memory} limit, respectively. All BFWS planners are implemented in C++ using the planning modules from \textit{LAPKT} \cite{lapkt} and grounder from \textit{Tarski} \cite{tarski}. We use every benchmark in the IPC \emph{satisficing} track to evaluate the correctness of the novelty approximation $\hat w$, and performance of new planners that use $\hat w$. In case a domain has appeared over multiple IPCs, we used the problem set from the most recent IPC. We compare our new planners against notable \emph{polynomial} planners: BFWS($f_5$) with novelty pruning and $\langle1,2$-C-M$\rangle$, a sequential polynomial planner  \cite{lipovetzky:17:icaps}, as well as two state-of-the-art planners {DUAL-BFWS} \cite{lipovetzky:17:aaai} and LAMA-\textit{first} \cite{richter2010lama}. We show that the introduction of these methods has a significant impact on the performance of the BFWS algorithms.

\subsubsection{Correctness of novelty approximation.}

 We evaluate the reliability of the novelty approximation by observing the effect on rate of correct and incorrect (\emph{lower} or \emph{higher}) approximation of novelty over varying sizes of sample $\bar Z$  and Bloom Filter $r$,  scaled by a multiplicative factor $\delta$. The novelty approximation $\hat w$ is \emph{correct} or \emph{accurate} if $\hat w(s)$$=$$w(s)$. We limit the maximum size of tuple evaluated to 3, as higher order computations for exact novelty $w$ were infeasible within the practical constraints of time and memory. Thus, $w: S \rightarrow {\cal W}$, where  $\cal{W}$$=$$[1 , 4]$, and $w$$=$$4$ represents all nodes with $w$$>$$3$ .  To distinguish the impact of sampling from that of Bloom Filter, we capture the results of novelty approximation \emph{with} and \emph{without} Bloom Filter, hereafter, represented as $\hat w$ and $\bm{\hat w_{\bar b}}$, respectively. We capture the statistics from 1200 solved instances in IPC satisficing benchmarks.

\begin{figure}[t]
	\includegraphics[width=\linewidth]{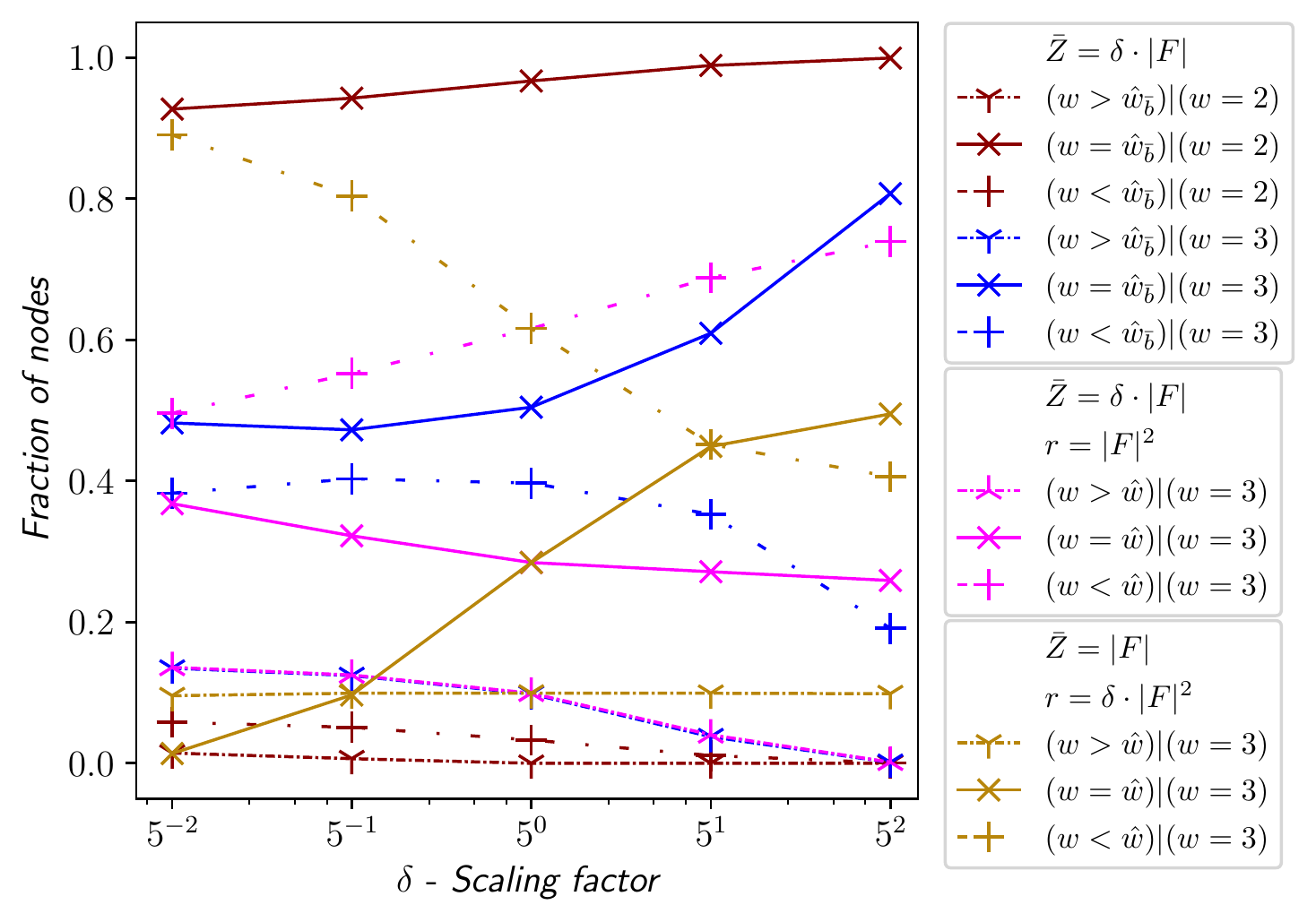}
	\caption{Variation in the rate of \emph{accurate}  ($w=\hat w$), \emph{lower} ($w> \hat w $)  and \emph{higher} ($ w < \hat w$) approximation of novelty $w$ over different sizes of sample($\bar Z$) and Bloom Filter($r$).}
	\label{fig1}
\end{figure}

From Fig.~\ref{fig1}, we note that the rate of \emph{correct} approximate novelty ($w = \hat w_{\bar b}$) increases with sample size $\bar Z$, when Bloom Filters are not used. This backs up our analysis in Section~\ref{p-error} that the accuracy of novelty approximation is likely to increase with sample size. We also observe that the rate does not decrease below $1/2$ even for $w=3$, where the sample is order of $1/|F|^2$ smaller than the exhaustive set. This is a significant improvement over a trivial method of using a coin toss to determine whether or not a tuple $t$ of size $l$ is \emph{new} in state $s$, which has probability $1/8$ of selecting \emph{correct} novelty, given $w$$=$$3$ .

While we note that the novelty approximation without Bloom Filters performs satisfactorily in terms of correctness, it is still  infeasible to store exhaustive set of tuples $\beta_l(s)$ of size $O(|F|^l)$, when $l$$\ge$$3$, for many IPC problems. As discussed in Section~\ref{p-error}, we address this by using Bloom Filters for evaluation of $\hat w \ge3$. With this addition, we observe a slight decrease in the rate of \emph{correct} novelty approximation, which is the consequence of false positives, discussed in Section~\ref{section:bloom_filters}. Also, we observe that the trend along sample size is reversed, i.e.,  the rate of \emph{correct} novelty approximation now decreases with increase in sample size. The trend is in line with the theoretical analysis in the Section \emph{Synergies between Sampling and Bloom Filters}. On the other hand, increasing the size of Bloom Filter $r$ improves the results, as the false positive rate decreases.

\subsubsection{Performance over benchmarks.}

\begin{figure}[t]
	\centering
	\includegraphics[width=0.85\linewidth]{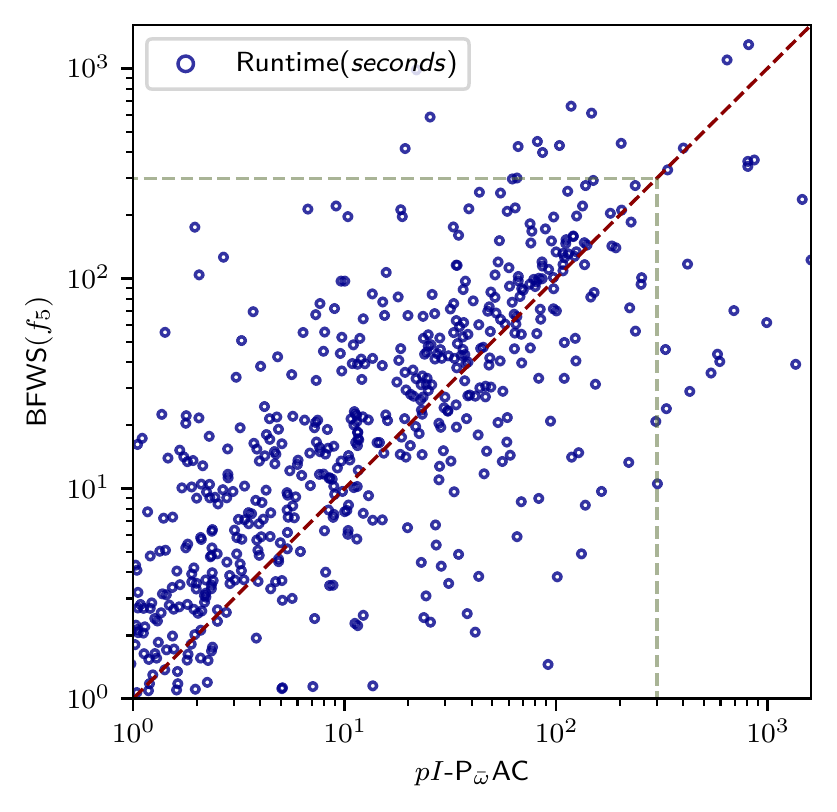}
	\caption{Pairwise comparison of runtime, over All IPC satisficing benchmarks, between BFWS($f_5$) and $pI$-P$_{\bar\omega}$AC.}
	\label{fig6}
\end{figure}

Hereafter, we represent a particular configuration of BFWS planner as '$\bm{pI\text{-(P}\vert {\cal L}}$)$\bm{_{\bar\omega}\text{AC}}$'. The \textit{prefix} '$p$-' refers to the use of novelty based pruning for nodes with $\hat w(s)$ $>$ $\bar{\omega}.$ '$I$-' refers to BFWS called sequentially until the problem is solved over $\bar\omega \in$ $[1,|F|]$. P$_{\bar \omega}$ refers to \textit{BFWS($f_5$)} planner with the set of possible novelty categories $\cal W$ $=$ $[1,\bar\omega+1]$. ${\cal L}_{\bar \omega}$ refers to BFWS($f_5$) with the goal counting heuristics replaced by landmark counts \cite{RichterHelmertWestphal:08:aaai}, that is $f_{\cal L}$=$\langle w,h_L \rangle$. 'A' denotes that $\hat{w}(s)$ is used instead of $w(s)$, and 'C' denotes that BFWS is modified to control open list growth as described in Section~\ref{olc}. All 'AC' planners were run 4 times with different seeds, so we report the \textit{mean and standard deviation} of statistics of interest.
	 
	 	{We set the sample size $\bar{Z}$$=$$ |F|$ so as to maintain a linear time complexity. We found that $D_{max}$ values between 100 MB and 1GB had similarly good results for '$pI$-P$_{\bar \omega}$AC', we show the results for $D_{max}$ $=$ $500~MB$. For the Bloom Filters size $r$, we didn't observe much variation between $100~KB ~/~8\cdot10^5$ $bits$ and $10~MB~/~8\cdot10^7$ $bits$, with $D_{max}$ $=$ $1~GB$. In our final implementation, we set an initial value of $r$ $=$  $|F|^2$, subject to increase when ${\bar \omega}r \cdot |V|$ $<$  $D_{max}$ $\land$ $|V|$ $=$ $|G| \cdot |F|$, and decrease when ${\bar \omega}r$  $>$  $D_{max}$ $\land$ $|V|$ $=$ $1$. A total of 103 instances out of 1691 used the  \textit{bank} of Bloom filters $V$, described in Section~\ref{def:approx_novelty}, ensuring that novelty computation does not exceed $D_{max}$. Lastly, we use the solution to the problem of minimizing $P_f$ in Equation~\ref{eq:bloom_filter} to chose the number of hash functions as $K$$=$$\ln2~(r/q)$,  where $q$ $=$ $\binom{|F|}{\omega}$.}
	
	\begin{table}[t]
		\centering
		{
			\begin{tabular}{
					@{}l|%
					@{\extracolsep{2pt}}r|%
					@{\extracolsep{2pt}}r|%
					@{\extracolsep{2pt}}r|%
					@{\extracolsep{2pt}}r%
					@{\extracolsep{2pt}}r%
					@{\extracolsep{2pt}}r%
					@{\extracolsep{2pt}}r%
					@{}
				}
				\toprule
				{} &      $\bar w=1$ &      $\bar w=2$ &      $\bar w=3$ &     $\bar w\ge 4$ \\
				\midrule
				\emph{\# Instances} &  100.00 \% &  18.92\%  &  3.68\%  &  1.05\%  \\
				\bottomrule
				
			\end{tabular}
		}
		\caption{\% of instances across all IPC satisficing benchmarks where a node of novelty $\bar w$ was recorded in found plans.} \label{table2}
	\end{table}

	Looking back at the motivation, a key driver for introducing the novelty approximation was to enable novelty computation for values greater than 2, which was infeasible for many IPC domains with the exact novelty definition.   The results for $p$-P3A in Table~\ref{table3}, show that our hypothesis was indeed correct as computing higher novelties with approximation improves coverage. This is substantiated in Table~\ref{table2} which shows that $\approx5\%$ of the solved instances had one or more nodes with $\bar w(s) \ge 3$ in the solution plan. Moreover,  the coverage of approximate planners with $\bar \omega$ $=2$, P$_2$A and $p$-P$_2$A, improves in comparison to P$_2$ and $p$-P$_2$, respectively, which indicates that there is no apparent demerit of using novelty approximation. The improvement can be attributed to \emph{polynomial} time and space complexity of $\hat w(s)$ allowing for additional search capacity.
	
{Though  BFWS($\hat{f}_5$) performs satisfactorily, it has a key shortcoming which impacts the search within the limited time environment, i.e. for large instances of domains with width $i$ $>$ $2$, the BFWS search driven by the evaluation function $f_5$ $=$ $\langle w,\#g\rangle$ exhausts all the available time in expanding nodes with $w(s)$ $\leq$ $2$. Moreover, the issue gets compounded for domains with high branching factor as the open list doesn't fit within the memory bounds. We address both the issues by applying the open list control discussed in Section~\ref{olc}. In our implementation, the \textit{control} is not applied to child nodes with novelty $w(s)$$=$$1$, as the maximum count of such nodes is small, $O(\vert F \vert)$,  and have minimal impact on space.  Note that this method will not cause the search to become incomplete. However, if we choose not to maintain the \textit{holding queue}, we get a search that is \textit{incomplete} and terminates early. 	Introducing the \textit{open list control} in BFWS ($\hat{f_5}$) leads to noticeable improvement in coverage of  P$_2$AC and  P$_3$AC which can be observed in Table~\ref{table3}. We do not report tables on plan length due to space limits as plan length remains similar for all configurations.}
		
	\begin{figure}[t]
	\includegraphics[width=0.9\linewidth]{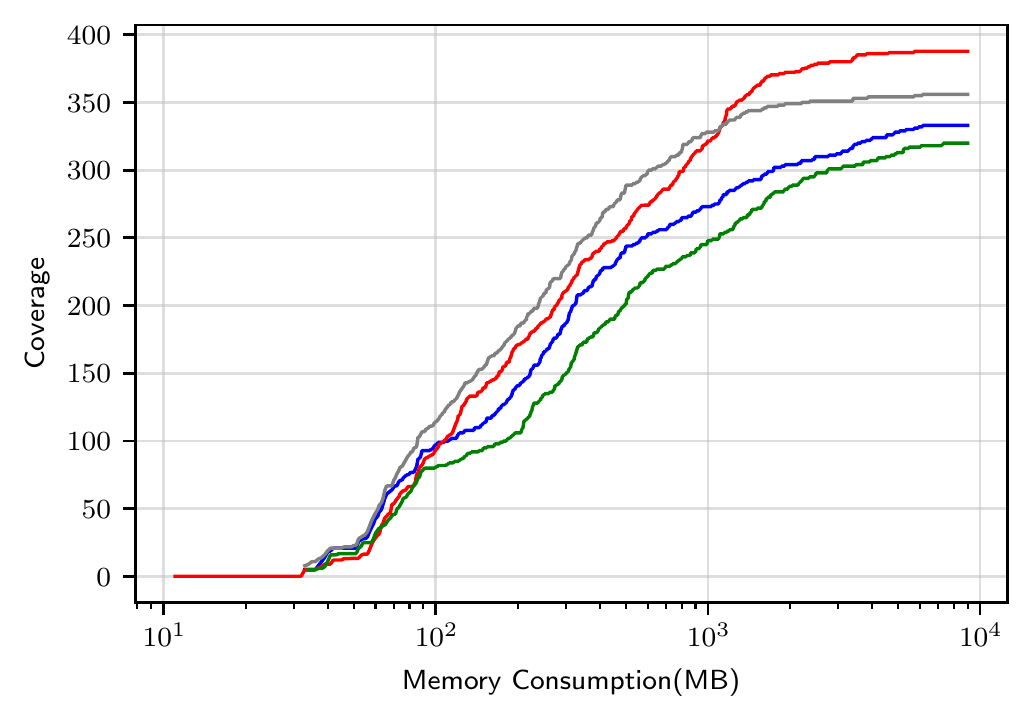}		\includegraphics[width=0.9\linewidth]{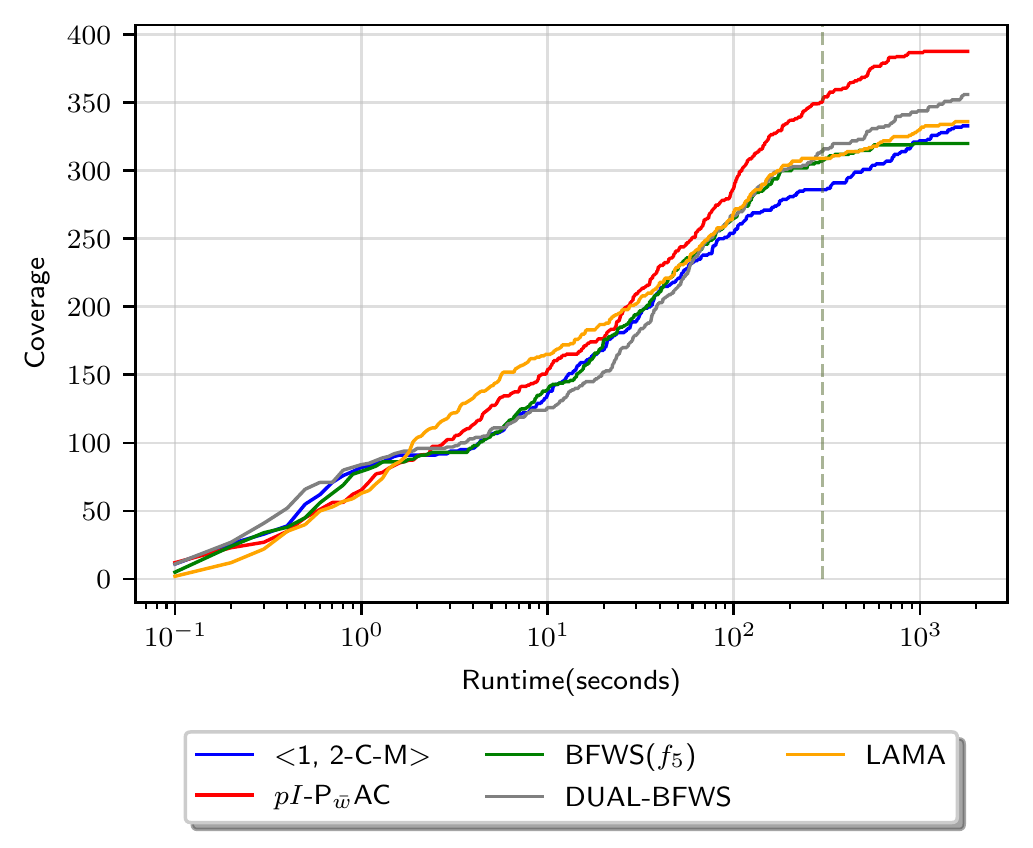}

	\caption{\textit{Coverage} over \textit{memory(MB)} and \textit{time(seconds)} on IPC 2014 and 2018 satisficing benchmarks.}
	\label{fig3}
\end{figure}

\begin{table*}[t] 
	\centering
    \def\arraystretch{0.69}
	{
	\footnotesize
		\begin{tabular}
			{%
				@{}l%
				@{\extracolsep{2pt}}r%
				@{\extracolsep{2pt}}r%
				@{\extracolsep{2pt}}r|%
				@{\extracolsep{2pt}}r%
				@{\extracolsep{2pt}}r%
				@{\extracolsep{2pt}}r%
				@{\extracolsep{2pt}}r||%
				@{\extracolsep{2pt}}r%
				@{\extracolsep{2pt}}r|%
				@{\extracolsep{2pt}}r%
				@{\extracolsep{2pt}}r%
				@{\extracolsep{2.5pt}}r%
				@{\extracolsep{2.5pt}}r%
				@{\extracolsep{2pt}}r%
				@{\extracolsep{2.1pt}}r%
				@{\extracolsep{2.1pt}}r%
				@{}}
			\toprule
			{domain} &  B1 & B2 & P$_2$ &           P$_2$A &          P$_2$AC &   P$_3$A &       P$_3$AC & $p$-P$_2$ & B3 &       $p$-P$_2$A &       $p$-P$_3$A &      $pI$-P$_{\bar\omega}$AC & $pI$-${\cal L}_{\bar\omega}$AC  \\
			\midrule
			agricola (20)                &    12 &         8 &      11 &    11$\pm$1.0 &    12$\pm$1.7 &    15$\pm$0.8 &     {\textbf{16$\pm$1.3 }} &     10 &        15 &    10$\pm$0.6 &    15$\pm$0.8  &           {\textbf{16$\pm$1.3}}  &       12$\pm$0.5 \\
			airport (50)                      &    34 &       {\textbf{47}}   &      46 &    46$\pm$0.6 &    46$\pm$0.6 &    44$\pm$0.0 &    44$\pm$0.6 &     46 &         {\textbf{47}}   &    46$\pm$0.6 &    45$\pm$1.0 &         46$\pm$0.6 &       46$\pm$0.5 \\
			assembly (30)                     &   {\textbf{30}}  &        {\textbf{30}}  &      {\textbf{30}}  &    {\textbf{30$\pm$0.6}}  &    29$\pm$1.0 &    {\textbf{30$\pm$0.6}}  &    {\textbf{30$\pm$0.6 }} &     {\textbf{30}}  &       {\textbf{30}}  &    {\textbf{30$\pm$0.5}}  &    {\textbf{30$\pm$0.0}}  &           {\textbf{30$\pm$0.0 }} &       {\textbf{30$\pm$0.0}}  \\
			caldera (20)                &    16 &        {\textbf{20}}  &      15 &    16$\pm$1.0 &    {\textbf{20$\pm$0.5}}  &    16$\pm$1.0 &    18$\pm$0.5 &     19 &       {\textbf{20}}   &    {\textbf{20$\pm$0.5}}  &    18$\pm$0.5 &            {\textbf{ 20$\pm$0.5 }}&       {\textbf{20$\pm$0.0}}  \\
			cavediving (20)                   &     7 &         7 &       7 &     7$\pm$0.0 &     8$\pm$0.6 &     8$\pm$0.0 &     8$\pm$0.5 &      1 &         8 &     2$\pm$2.9 &     8$\pm$0.5 &             {\textbf{9$\pm$1.0}}   &        8$\pm$0.5 \\
			childsnack (20)     &     6 &        {\textbf{10}}  &       0 &     4$\pm$1.3 &     5$\pm$1.7 &     3$\pm$1.9 &     6$\pm$0.6 &      0 &         2 &     5$\pm$0.5 &     6$\pm$0.6 &              8$\pm$1.3 &        8$\pm$1.3 \\
			citycar (20)          &     5 &        {\textbf{20}}  &       5 &     5$\pm$0.0 &    {\textbf{20$\pm$0.0}}  &     5$\pm$0.0 &    {\textbf{20$\pm$0.6}}  &     {\textbf{20}}  &       {\textbf{20}}   &   {\textbf{20$\pm$0.5}}  &     5$\pm$0.0 &            {\textbf{20$\pm$0.0}}  &       {\textbf{20$\pm$0.0 }} \\
			data-network (20)          &    13 &        11 &       9 &    12$\pm$1.7 &    {\textbf{19$\pm$1.0}} &    11$\pm$2.5 &    18$\pm$0.5 &     16 &        14 &    17$\pm$1.0 &    16$\pm$0.6 &            18$\pm$0.6 &       18$\pm$0.6 \\
			depot (22)                        &    20 &        {\textbf{22}}  &     {\textbf{22}}   &   {\textbf{22$\pm$0.0}}  &    {\textbf{22$\pm$0.0}} &    {\textbf{22$\pm$0.0}}  &    {\textbf{22$\pm$0.0}} &     {\textbf{22}}  &        {\textbf{22}}  &    {\textbf{22$\pm$0.0}}  &       {\textbf{22$\pm$0.0}}  &        {\textbf{22$\pm$0.0}}  &      {\textbf{22$\pm$0.0}}  \\
			flashfill (20)            &    14 &        {\textbf{16}}  &      12 &    14$\pm$2.4 &    14$\pm$0.6 &    14$\pm$2.2 &    14$\pm$1.0 &     15 &         9 &    14$\pm$2.4 &    14$\pm$1.9 &            14$\pm$1.0 &       14$\pm$1.0 \\
			floortile (20)       &     {\textbf{2}}  &         {\textbf{2}}  &       1 &     {\textbf{2$\pm$0.}} 5 &     {\textbf{2$\pm$0.0}} &     {\textbf{2$\pm$0.0}} &     {\textbf{2$\pm$0.0}}  &      0 &         1 &     1$\pm$0.0 &     {\textbf{2$\pm$0.5}}  &              {\textbf{2$\pm$0.0}}  &        {\textbf{2$\pm$0.0}}  \\
			hiking (20)          &    {\textbf{20}}  &        12 &      12 &    14$\pm$2.1 &     8$\pm$0.8 &    18$\pm$1.0 &    {\textbf{20$\pm$0.5}}  &      9 &        13 &    12$\pm$1.8 &    {\textbf{20$\pm$0.0}} &          {\textbf{20$\pm$0.0}}  &       19$\pm$0.5 \\
			maintenance (20)       &    11 &        {\textbf{17}}  &      {\textbf{17}}  &    16$\pm$0.5 &    16$\pm$0.6 &    16$\pm$0.5 &    {\textbf{17$\pm$0.5}}  &     {\textbf{17}}  &        {\textbf{17}}  &    16$\pm$0.5 &    16$\pm$0.5 &          {\textbf{17$\pm$0.5}}   &      {\textbf{17$\pm$0.5}}  \\
			mprime (35)                      &    {\textbf{35}}  &        {\textbf{35}}  &      32 &    30$\pm$0.6 &    {\textbf{35$\pm$0.0}}  &    31$\pm$0.5 &    34$\pm$0.8 &     {\textbf{35}}  &       {\textbf{35}}   &    {\textbf{35$\pm$0.0}}  &       32$\pm$0.8 &        {\textbf{35$\pm$0.0}}  &       {\textbf{35$\pm$0.0}}  \\
			mystery (30)                      &    {\textbf{19}}  &       {\textbf{19}}   &    {\textbf{19}}  &    {\textbf{19$\pm$0.0 }} &    {\textbf{19$\pm$0.0}}  &    {\textbf{19$\pm$0.5}}  &    {\textbf{19$\pm$0.5}}  &     {\textbf{19}}  &        18 &    {\textbf{19$\pm$0.0}} &    {\textbf{19$\pm$0.5}} &        {\textbf{19$\pm$0.5}}  &      {\textbf{19$\pm$0.5}}  \\
			nomystery (20)       &    11 &        {\textbf{19}}  &      13 &    14$\pm$1.0 &    12$\pm$1.0 &    13$\pm$0.5 &    14$\pm$1.0 &     13 &        13 &    12$\pm$1.7 &    14$\pm$0.6 &          15$\pm$1.0 &       18$\pm$1.4 \\
			nurikabe (20)               &     9 &        14 &      {\textbf{16}}  &    14$\pm$0.6 &    15$\pm$1.3 &    14$\pm$0.6 &    15$\pm$2.1 &     {\textbf{16}}  &        {\textbf{16}}  &    14$\pm$0.5 &    14$\pm$0.6 &           15$\pm$1.0 &       14$\pm$0.6 \\
			org-synth-split (20)&    {\textbf{12}}  &        11 &       5 &     6$\pm$0.5 &     3$\pm$0.8 &     7$\pm$1.0 &     5$\pm$1.4 &      4 &         3 &     4$\pm$0.5 &     6$\pm$1.0 &            7$\pm$0.0 &        6$\pm$0.5 \\
			parcprinter (20)     &    {\textbf{20}}  &        16 &       9 &     5$\pm$1.0 &     5$\pm$1.9 &     5$\pm$0.8 &     6$\pm$1.0 &      9 &        16 &     6$\pm$1.0 &     5$\pm$1.0 &             8$\pm$0.0 &        6$\pm$0.5 \\
			pathways-neg (30)               &    24 &        {\textbf{30}}  &      23 &    {\textbf{30$\pm$0.6}}  &    29$\pm$1.5 &    {\textbf{30$\pm$0.6}}  &    29$\pm$0.5 &     24 &        27 &    {\textbf{30$\pm$0.5}} &        {\textbf{30$\pm$0.6}} &        {\textbf{30$\pm$0.0}}  &       {\textbf{30$\pm$0.0}} \\
			pegsol (20)         &     {\textbf{20}}  &         {\textbf{20}}  &       {\textbf{20}}  &    {\textbf{20$\pm$0.0 }} &     {\textbf{20$\pm$0.5}}  &     {\textbf{20$\pm$0.0}} &     {\textbf{20$\pm$0.0}} &      5 &         {\textbf{20}}  &    12$\pm$1.5 &    18$\pm$0.5 &            {\textbf{20$\pm$0.0}}  &        {\textbf{20$\pm$0.0}} \\
			pipesworld-nt (50)         &    43 &        {\textbf{50}}  &       {\textbf{50}}  &     {\textbf{50$\pm$0.0}}  &     {\textbf{50$\pm$0.0}}  &    {\textbf{50$\pm$0.0 }} &     {\textbf{50$\pm$0.0}}  &      {\textbf{50}}  &         {\textbf{50}}  &     {\textbf{50$\pm$0.0}}  &     {\textbf{50$\pm$0.0}} &           {\textbf{50$\pm$0.0}} &        {\textbf{50$\pm$0.0}}  \\
			pipesworld-t (50)           &     {\textbf{43}}  &        38 &       {\textbf{43}}  &    42$\pm$0.5 &    42$\pm$0.6 &    42$\pm$0.5 &     {\textbf{43$\pm$0.8}}  &     41 &        39 &    41$\pm$1.5 &    42$\pm$0.5 &         42$\pm$1.2 &        {\textbf{43$\pm$1.5}}  \\
			psr-small (50)                    &     {\textbf{50}}  &        {\textbf{50}}   &      48 &    49$\pm$0.5 &    49$\pm$0.5 &     {\textbf{50$\pm$0.0}}  &    {\textbf{50$\pm$0.0 }} &     31 &        46 &    34$\pm$1.3 &    43$\pm$0.8 &            49$\pm$0.5 &       48$\pm$0.6 \\
			rovers (40)                       &    {\textbf{40}}   &        37 &      39 &     {\textbf{40$\pm$0.0 }} &     {\textbf{40$\pm$0.0}}  &     {\textbf{40$\pm$0.0}}  &     {\textbf{40$\pm$0.0}}  &     39 &        38 &     {\textbf{40$\pm$0.0}}  &     {\textbf{40$\pm$0.0}} &          {\textbf{40$\pm$0.0}}   &       {\textbf{40$\pm$0.0}}  \\
			satellite (36)                    &     {\textbf{36}}  &        31 &      27 &    30$\pm$0.8 &    32$\pm$0.6 &    30$\pm$0.5 &    30$\pm$0.0 &     27 &        31 &    32$\pm$0.5 &    30$\pm$0.5   &     34$\pm$0.6 &       34$\pm$0.6 \\
			schedule (150)                     &    {\textbf{150}}  &       149 &     149 &   149$\pm$1.0 &   149$\pm$0.8 &   149$\pm$1.0 &    {\textbf{150$\pm$0.6}} &    149 &       149 &   149$\pm$1.0 &   149$\pm$1.0 &      149$\pm$1.0 &      149$\pm$1.0 \\
			settlers (20)                &     {\textbf{18}}  &         8 &       7 &     6$\pm$1.0 &    12$\pm$0.6 &     6$\pm$1.3 &    10$\pm$0.0 &     10 &        11 &     9$\pm$1.0 &     6$\pm$1.0 &          12$\pm$0.6 &       17$\pm$1.3 \\
			snake (20)                   &     5 &        12 &      19 &    16$\pm$0.5 &    15$\pm$0.8 &    17$\pm$0.5 &    17$\pm$0.5 &     18 &         3 &    16$\pm$0.5 &    17$\pm$0.5 &          {\textbf{20$\pm$0.5}}   &         {\textbf{20$\pm$0.5}}  \\
			sokoban (20)       &     {\textbf{19}}  &        17 &      14 &    15$\pm$0.5 &    10$\pm$1.0 &    16$\pm$0.5 &    14$\pm$0.8 &      6 &        13 &     4$\pm$0.6 &    11$\pm$0.5 &            16$\pm$0.6 &       16$\pm$0.6 \\
			spider (20)                 &     {\textbf{16}}  &        14 &      13 &    15$\pm$1.0 &    14$\pm$1.0 &    15$\pm$1.0 &    14$\pm$1.3 &     13 &        11 &    15$\pm$1.0 &    15$\pm$1.0 &         14$\pm$1.0 &       15$\pm$0.5 \\
			storage (30)                      &    20 &        28 &      29 &     {\textbf{30$\pm$0.6}}  &     {\textbf{30$\pm$0.6 }} &     {\textbf{30$\pm$0.6}}  &     {\textbf{30$\pm$0.5}}  &      {\textbf{30}}  &        29 &     {\textbf{30$\pm$0.6}}  &     {\textbf{30$\pm$0.6}} &            {\textbf{30$\pm$0.5}}  &        {\textbf{30$\pm$0.6}}  \\
			termes (20)                  &     {\textbf{16}}  &         9 &       9 &    10$\pm$0.0 &     8$\pm$0.6 &     9$\pm$1.0 &     8$\pm$1.3 &      1 &         6 &     2$\pm$0.5 &     6$\pm$1.9 &          7$\pm$1.4 &       10$\pm$1.3 \\
			tetris (20)           &    16 &        16 &       {\textbf{20}}  &     {\textbf{20$\pm$0.0}} &     {\textbf{20$\pm$0.0}}  &     {\textbf{20$\pm$0.0}} &    {\textbf{20$\pm$0.0}}   &     {\textbf{20}} &        18 &     {\textbf{20$\pm$0.0}} &     {\textbf{20$\pm$0.0}}  &           {\textbf{20$\pm$0.0}}   &        {\textbf{20$\pm$0.0}}  \\
			thoughtful (20)      &    15 &         {\textbf{20}}  &       {\textbf{20}}  &     {\textbf{20$\pm$0.0}}  &    {\textbf{20$\pm$0.0}}  &     {\textbf{20$\pm$0.0}}  &     {\textbf{20$\pm$0.0}} &     {\textbf{20}}  &        {\textbf{20}}   &     {\textbf{20$\pm$0.0}} &    {\textbf{20$\pm$0.0}}  &          {\textbf{20$\pm$0.0}} &        {\textbf{20$\pm$0.0}}  \\
			tidybot (20)         &    17 &        18 &      19 &     {\textbf{20$\pm$0.0}}  &     {\textbf{20$\pm$0.5 }} &     {\textbf{20$\pm$0.0}}  &     {\textbf{20$\pm$0.0}}  &     {\textbf{20}}   &         {\textbf{20}}  &     {\textbf{20$\pm$0.0}} &     {\textbf{20$\pm$0.0}}  &            {\textbf{20$\pm$0.0}}  &       19$\pm$0.5 \\
			tpp (30)                          &    30 &        29 &      29 &     {\textbf{30$\pm$0.6}}  &     {\textbf{30$\pm$0.0}}  &    29$\pm$0.0 &     {\textbf{30$\pm$0.0}} &      {\textbf{30}}  &        {\textbf{30}}   &     {\textbf{30$\pm$0.0}} &     {\textbf{30$\pm$0.0}}  &         {\textbf{30$\pm$0.0}}  &        {\textbf{30$\pm$0.0}} \\
			transport (20)       &    16 &         {\textbf{20}}  &       {\textbf{20}}  &     {\textbf{20$\pm$0.0}}  &   {\textbf{20$\pm$0.0}}   &     {\textbf{20$\pm$0.0 }} &    {\textbf{20$\pm$0.0}}   &     {\textbf{20}}   &         {\textbf{20}}  &  {\textbf{20$\pm$0.0}}    &     {\textbf{20$\pm$0.0}}  &           {\textbf{20$\pm$0.0}} &       {\textbf{20$\pm$0.0}}  \\
			trucks-strips (30)                &     {\textbf{18}}  &        16 &       9 &     9$\pm$0.8 &     9$\pm$1.3 &     9$\pm$0.8 &    10$\pm$1.4 &     11 &         8 &    12$\pm$1.8 &    11$\pm$1.3 &             12$\pm$1.3 &       12$\pm$0.8 \\
			\midrule
			Total (1691)                        &  1456 &      1496 &    1436 &  1455$\pm$8.7 &  1476$\pm$4.2 &  1463$\pm$8.9 &  1502$\pm$4.9 &   1414 &      1456 &  1438$\pm$5.9 &  1462$\pm$8.0 &     {\textbf{1524$\pm$2.5}}  &     1516$\pm$5.0 \\

			\bottomrule
			
		\end{tabular}
		\caption{Coverage over all satisficing benchmarks from IPCs: \textit{complete} - B1: LAMA-\emph{first}, B2: DUAL-BFWS and 'P...', and \textit{polynomial} \textit{incomplete} - B3: $\langle1,2$-C-M$\rangle$ and '$p$-P...'. P$_{\bar \omega}$ refers to BFWS($f_5$) planner with $\cal W$ $=$ $[1,\bar\omega+1]$, ${\cal L}_{\bar \omega}$ is BFWS($f_{\cal L}$) , which uses \textit{Landmarks} , '$I$-' stands for \textit{Iterative}, 'A' for \textit{approximate}, and 'C' for \textit{control} over \textit{open list}.
			The \textit{mean coverage} is shown along with the \textit{standard deviation} for the planners that use sampling. Domains which are fully solved by all planners are omitted but included in supplementary material~\cite{tech-appendix}. The best results are highlighted in bold. } \label{table3}
	}
	
\end{table*}

	At this point, we discuss a new planner, where we iteratively run the  \emph{polynomial}  BFWS($\hat f_5$)  with novelty based pruning, sequentially increasing the number of novelty categories $\cal W$ at each iteration,  $\cal W$ $=$ $[1,\bar\omega+1]$, over $\bar \omega \in [1,|F|]$. We denote the planner as '$pI$-P$_{\bar \omega}$AC' where $I$ stands for \textit{iterative}. Informally, its major advantage is that it taps into the low polynomial space and time complexity of $p$-P$_{\bar \omega}$AC with small $\bar \omega$ values as well as the greater coverage with larger $\bar \omega$ . This can be observed in Table~\ref{table3}, which shows a significant jump in coverage compared to BFWS($f_5$) with novelty based pruning($p$-P$_2$) and $\langle1,2$-C-M$\rangle$ (B3).
	
	The coverage is also higher than the state-of-the-art LAMA-\emph{first} (B1) and DUAL-BFWS (B2). Moreover, from Fig.~\ref{fig6}, we note that '$pI$-P$_{\bar \omega}$AC' planner has better runtime performance than BFWS($f_5$), the winner of Agile track (IPC 2018). It solved $59$ more instances than BFWS($f_5$) across every IPC satisficing benchmarks with a $300$ $sec$ \emph{time} and $8\;GB$ \emph{memory} limit. At the same time. Fig.~\ref{fig3} confirms that the space and time consumption is much less than the baseline BFWS planners.
	 It is worth pointing that '$pI$-P$_{\bar \omega}$AC' is probabilistically \textit{incomplete}. Also, we did not observe any difference in coverage of '$pI$-P$_{\bar \omega}$AC' \emph{with} or \emph{without} the \emph{holding queue}, as the nodes pruned at one iteration get selected in subsequent iterations with positive probability.
	
	\subsubsection{Discussion}
	
	We show that \textit{approximate novelty search} greatly improves the performance over baseline BFWS planners. The ability to compute $\hat w$ $>$ $2$ using novelty approximation, within practical constraints of time and memory, allows us to use '$pI$-P$_{\bar \omega}$AC' configuration that  beats the state-of-the-art. This is impressive for a sequential polynomial planner which uses simple goal counting heuristics $\#g(s)$ and relaxed plan counter $\#r(s)$ along with $\hat w$  to direct the search. Also, we can observe that certain domains were affected more than others. Specifically, the domains \textit{citycar, data-network, hiking and satellite} benefited significantly.

	We found that the \textit{open list control} significantly benefited the domains \textit{citycar and data-network} which have a high branching factor but solvable with $\hat \omega$ $\leq$ $2$. \textit{Citycar} in particular was fully solvable with $\hat \omega$  $=$ $1$  and discarding nodes with $w(s)$ $>$ $1$ didn't impact the order of expansion. \textit{Hiking} and \textit{satellite} on the other hand required expansion of nodes of $w(s)$ $>$ $2$, and the increased coverage highlights the 
	{importance of policy based control of different novelty categories in the open list.}
	 \textit{Childsnack and Floortile} however showed no improvement, which is a combined effect of high width and the fact that our goal count heuristic $\#g(s)$ is not informed enough.

	\section{Conclusion}
	The proposed methods of novelty approximation and open list control in \textit{BFWS} not only have positive impact on coverage but also on the overall time and space complexity of the search, resulting in new state-of-the-art planners over satisficing benchmarks from every IPC since 1998 and more significantly the last 2 IPCs (2014 and 2018). These results strongly suggest that probabilistically complete search algorithms are a promising research direction in classical planning. 
	This is specially crucial in limited time and memory environments where the search must work within hard constraints on time and memory. However, we must note that approximate novelty search is by no means a silver bullet, and certain domains including \textit{Childsnack and Floortile} still remain unsolvable. We hope this work brings about the insights to develop the next generation of
	classical planners, that scale up better as the intractability of the benchmarks ramps up and tackle the inherent limitations of BFS.
	
	\section*{Acknowledgements}
	
	Anubhav Singh is supported by Melbourne Research Scholarship established by the University of Melbourne.
	
	Javier Segovia-Aguas is supported by TAILOR, a project funded by EU H2020 research and innovation programme no. 952215, an ERC Advanced Grant no. 885107, and grant TIN-2015-67959-P from MINECO, Spain.
	
	This research was supported by use of the Nectar Research Cloud, a collaborative Australian research platform supported by the National Collaborative Research Infrastructure Strategy (NCRIS).
	
	\bibstyle{aaai21}
	\bibliography{crossref,biblio}
	
\end{document}